\documentclass{article}

\usepackage[utf8]{inputenc}
\usepackage[T1]{fontenc}
\usepackage{hyperref}
\usepackage{url}
\usepackage{booktabs}
\usepackage{amsfonts}
\usepackage{nicefrac}
\usepackage{microtype}
\usepackage{xcolor}
\usepackage{float}
\usepackage{graphicx}
\usepackage{bm}
\usepackage[margin=1.03in]{geometry}
\usepackage{algorithm}
\usepackage{algorithmic}
\usepackage{enumerate}
\usepackage{amsmath}
\usepackage{amssymb}
\usepackage{mathtools}
\usepackage{amsthm}
\usepackage{mathrsfs}

\newtheorem{theorem}{Theorem}

\newcommand{\A}{\mathcal{A}}
\newcommand{\E}{\mathbb{E}}
\renewcommand{\Pr}{\mathbb{P}}
\newcommand{\wh}{\widehat}
\newcommand{\R}{\mathbb{R}}
\newcommand{\cE}{\mathcal{E}}
\newcommand{\cF}{\mathcal{F}}

\hypersetup{
  colorlinks=true,
  linkcolor=blue!55!black,
  citecolor=blue!55!black,
  urlcolor=blue!65!black,
  pdftitle={Dimension-Adaptive Batched Lipschitz Narrowing Without Knowing the Zooming Dimension}
}

\title{Dimension-Adaptive Batched Lipschitz Narrowing\\
Without Knowing the Zooming Dimension}
\author{Yasong Feng}
\date{}

\begin{document}

\maketitle
\vspace{-1.2cm}

\begin{abstract}
The Appropriately Combined Edge-length (ACE) sequence in A-BLiN depends
on the zooming dimension $d_z$.  This note removes that dependence.  The
next edge length is selected from the number of cubes that survive the
preceding elimination.  The resulting Count-Adaptive BLiN algorithm does
not use $d_z$ or the zooming constant $C_z$, yet it attains
$\widetilde{\mathcal O}_d(T^{(d_z+1)/(d_z+2)})$ regret with
$\mathcal O_d(\log\log T)$ batches.  Together with the adaptive-grid lower
bound in Theorem~10 of the original paper, the optimal batch complexity
remains $\Theta_d(\log\log T)$ when $d_z$ is unknown.
\end{abstract}

\section{Setup and relation to the original results}

We retain the notation and model of \cite{feng2022blin}.  The arm space is
$\A=[0,1]^d$ equipped with $\|\cdot\|_\infty$, the mean reward
$\mu:\A\to\R$ is $1$-Lipschitz, and
$\Delta_x=\mu^\star-\mu(x)$.  Recall
\[
  S(r)=\{x\in\A:\Delta_x\le r\},
  \qquad
  N_r=\mathcal N\!\left(S(16r),\frac r2\right),
\]
and
\[
  N_r\le C_zr^{-d_z},\qquad 0<r<1.
\]
As in Algorithm~1 of \cite{feng2022blin}, $\A_m$ denotes the active
standard cubes in batch $m$, all having edge length $r_m$;
$\A_m^+$ denotes the cubes surviving the elimination in that batch; and
\[
  \wh\mu_m(C)=\frac1{n_m}\sum_{i=1}^{n_m}y_{C,i},
  \qquad
  \wh\mu_m^{\max}=\max_{C\in\A_m}\wh\mu_m(C).
\]

Theorem~4 of \cite{feng2022blin} shows that D-BLiN does not require
$d_z$ and attains the optimal regret exponent using $\mathcal O(\log T)$
batches.  Definition~3 and Theorem~5 use the $d_z$-dependent ACE sequence
to reduce this to $\mathcal O(\log\log T)$ batches.  Theorem~6 handles the
rounding of the ACE sequence.  The algorithm below instead makes the edge
length data-dependent and dyadic from the outset.

Let
\[
  H:=16\log T,
  \qquad
  a_0:=\frac{d+1}{d+2},
\]
and, for $T\ge3$, let
\begin{equation}
  B_0:=\max\left\{1,
  \left\lceil
    \frac{\log\log T}{\log\frac{d+2}{d+1}}
  \right\rceil\right\}.
  \label{eq:def-B0}
\end{equation}
For $s\in(0,1]$, define
\[
  \operatorname{dyad}(s):=2^{-\lceil\log_2(1/s)\rceil},
\]
the largest dyadic number not exceeding $s$.

For a collection $\mathcal C$ of cubes and a time $t$, the instruction
$\operatorname{Cleanup}(\mathcal C,t)$ means: play arbitrary arms in
$\bigcup_{C\in\mathcal C}C$ for the remaining $T-t$ rounds, without an
intermediate feedback round, and then terminate.

\begin{algorithm}[H]
\caption{Count-Adaptive Batched Lipschitz Narrowing (CA-BLiN)}
\label{alg:ca-blin}
\begin{algorithmic}[1]
\STATE \textbf{Input:} arm set $\A=[0,1]^d$ and time horizon $T$.
\STATE Initialize $r_0=1$, $\A_0^+=\{\A\}$, and $t_0=0$.
\FOR{$m=1,2,\ldots,B_0$}
  \STATE Set
  $K_{m-1}=|\A_{m-1}^+|$ and
  $\displaystyle\gamma_{m-1}=\frac{HK_{m-1}}{Tr_{m-1}^2}$.
  \IF{$\gamma_{m-1}\ge1$}
    \STATE $\operatorname{Cleanup}(\A_{m-1}^+,t_{m-1})$.
    \RETURN
  \ENDIF
  \STATE Set
  $\displaystyle\bar r_m=r_{m-1}\gamma_{m-1}^{1/[2(d+2)]}$,
  $r_m=\operatorname{dyad}(\bar r_m)$, and
  $\displaystyle n_m=\left\lceil\frac{H}{r_m^2}\right\rceil$.
  \STATE Partition every $C\in\A_{m-1}^+$ into
  $(r_{m-1}/r_m)^d$ standard cubes of edge length $r_m$;
  denote their collection by $\A_m$.
  \STATE Set $L_m=|\A_m|n_m$.
  \IF{$t_{m-1}+L_m>T$}
    \STATE $\operatorname{Cleanup}(\A_{m-1}^+,t_{m-1})$.
    \RETURN
  \ENDIF
  \STATE Play every $C\in\A_m$ exactly $n_m$ times and collect the
  rewards at $t_m=t_{m-1}+L_m$.
  \STATE Compute $\wh\mu_m(C)$ and
  $\wh\mu_m^{\max}=\max_{C\in\A_m}\wh\mu_m(C)$.
  \STATE Eliminate $C$ if
  $\wh\mu_m^{\max}-\wh\mu_m(C)>4r_m$; let $\A_m^+$ be the surviving
  cubes.
\ENDFOR
\STATE $\operatorname{Cleanup}(\A_{B_0}^+,t_{B_0})$.
\end{algorithmic}
\end{algorithm}

Neither $d_z$ nor $C_z$ is an input to Algorithm~\ref{alg:ca-blin}.
Moreover, $r_m$, $\A_m$, and $t_m$ are measurable with respect to the
observations available at time $t_{m-1}$, so the resulting batch grid is
adaptive in exactly the sense used in \cite{feng2022blin}.

\section{Regret and batch complexity}

\begin{theorem}[Dimension-adaptive regret and batch complexity]
\label{thm:ca-blin}
Assume the model of \cite{feng2022blin}, with $T\ge3$ and fixed ambient
dimension $d$.  Let $d_z$ and $C_z$ be the zooming dimension and zooming
constant of the instance, and define the analysis-only critical radius
\begin{equation}
  \rho:=\left(\frac{HC_z}{T}\right)^{1/(d_z+2)}.
  \label{eq:def-rho}
\end{equation}
Then, with probability at least $1-2T^{-7}$,
Algorithm~\ref{alg:ca-blin} satisfies
\begin{equation}
  R(T)\le 2^{d+6}(B_0+1)T\rho.
  \label{eq:main-bound-rho}
\end{equation}
Equivalently,
\begin{equation}
  R(T)\le
  2^{d+6}(B_0+1)
  C_z^{1/(d_z+2)}
  T^{\frac{d_z+1}{d_z+2}}
  (16\log T)^{1/(d_z+2)}.
  \label{eq:main-bound}
\end{equation}
The total number of batches is at most $B_0+1=\mathcal O_d(\log\log T)$.
Consequently, a single policy that does not know $d_z$ or $C_z$ attains
the optimal $T$-regret exponent of Theorem~5 of \cite{feng2022blin} with
the optimal batch-complexity order.
\end{theorem}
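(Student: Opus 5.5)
We follow the standard BLiN template: a clean event, a per-batch regret bound in terms of $r_m$, and then a separate argument that the count-adaptive choice of $r_m$ makes every batch cost at most $O(2^d T\rho)$.

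\textbf{Step 1: clean event.} Let $\cE$ be the event that for every batch $m\le B_0$ and every $C\in\A_m$,
\[
|\wh\mu_m(C)-\E[\wh\mu_m(C)]|\le \sqrt{H/(2n_m)}\le r_m/\sqrt2 .
\]
By Hoeffding with $n_m\ge H/r_m^2$, each deviation has probability at most $2T^{-8}$. The total number of (cube, batch) pairs is at most $T$, since every played cube is played at least once within the horizon. A union bound then gives $\Pr(\cE)\ge 1-2T^{-7}$.

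On $\cE$ we reuse the arguments of Algorithm~1 in \cite{feng2022blin}:
\begin{itemize}
\item the cube containing an optimal arm is never eliminated, since the Lipschitz bias is at most $r_m$ and the noise at most $2r_m/\sqrt2$, both below $4r_m$;
\item every $x$ in a surviving cube $C\in\A_m^+$ satisfies $\Delta_x\le c\,r_m$ for a small constant $c$ (e.g.\ $c=8$).
\end{itemize}
Hence every $C\in\A_m$ lies in $S(c\,r_{m-1})$. Because $r_m\ge r_{m-1}/2$ is not needed and the cubes are disjoint of edge $r_{m-1}$, a packing/covering comparison gives $|\A_{m-1}^+|\le 2^d N_{r_{m-1}}$ (with $16r\ge c r$) and $|\A_m|=(r_{m-1}/r_m)^d K_{m-1}$.

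\textbf{Step 2: regret per batch.} The regret of batch $m$ is at most
\[
|\A_m|\,n_m\cdot c\,r_{m-1}\approx K_{m-1}\,(r_{m-1}/r_m)^{d}\,(H/r_m^2)\,r_{m-1}.
\]
The choice $\bar r_m=r_{m-1}\gamma_{m-1}^{1/(2(d+2))}$ is designed to equalize this with $T$ times a radius. Writing $\gamma_{m-1}=HK_{m-1}/(Tr_{m-1}^2)$, we get
\[
K_{m-1}H r_{m-1}^{d+1}/r_m^{d+2}\approx T\,r_{m-1}^{d+3}\,\gamma_{m-1}/r_m^{d+2}\le 2^{d+2}\,T\,r_{m-1}\,\gamma_{m-1}^{1/2}.
\]
The dyadic rounding costs the factor $2^{d+2}$.

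Next we bound $r_{m-1}\gamma_{m-1}^{1/2}$ by a constant times $\rho$. Using $K_{m-1}\le 2^dC_z r_{m-1}^{-d_z}$,
\[
r_{m-1}^2\gamma_{m-1}\le 2^d\,\rho^{d_z+2}\,r_{m-1}^{-d_z}.
\]
If $r_{m-1}\ge\rho$ this gives $r_{m-1}\gamma_{m-1}^{1/2}\lesssim\rho$. If $r_{m-1}<\rho$, the batch regret is at most $T\cdot c\,r_{m-1}\le cT\rho$ trivially. The Cleanup steps, including the $\gamma\ge1$ and budget-overflow exits, cost at most $T\cdot c\,r_{m-1}$. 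On the $\gamma_{m-1}\ge1$ exit, $r_{m-1}^2\le 2^d\rho^{d_z+2}r_{m-1}^{-d_z}$ forces $r_{m-1}\lesssim\rho$. The overflow exit needs the same bound, which is the crux.

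\textbf{Step 3: the main obstacle, the final radius.} We must show that after $B_0$ batches (or at any overflow exit) $r_{B_0}\lesssim\rho$. Plan: track $\log(1/r_m)$. Since $K_{m-1}\ge1$, we have $\gamma_{m-1}\ge H/(Tr_{m-1}^2)$, so
\[
r_m\ge \tfrac12\, r_{m-1}^{(d+1)/(d+2)}(H/T)^{1/(2(d+2))},
\]
and radii never overshoot below $\approx(H/T)^{1/2}$. In the other direction, while $r_{m-1}>\rho$ we have $\gamma_{m-1}\le 2^d(\rho/r_{m-1})^{d_z+2}$. The recursion for $u_m=\log(r_m/\rho)$ then contracts roughly like
\[
u_m\le u_{m-1}\bigl(1-\tfrac{d_z+2}{2(d+2)}\bigr)+O(d),
\]
which seems only geometric, not doubly exponential. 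So instead I would compare with the ACE recursion: the exponent $a_0=(d+1)/(d+2)$ and the definition of $B_0$ suggest writing $r_m\approx(H/T)^{(1-a_0^m)/2}$-type bounds in the worst case $d_z=d$, and using $\log\log T/\log(1/a_0)$ steps to make $a_0^{B_0}\log T=O(1)$. I expect this step to need care. For the overflow exit, $L_m>T-t_{m-1}$ combined with the Step 2 computation, $L_m\approx T\gamma_{m-1}^{1/2}\cdot(\ldots)$, should give $\gamma_{m-1}$ of constant order and hence $r_{m-1}\lesssim\rho$ as in the $\gamma\ge1$ case.

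\textbf{Step 4: summation.} Summing $B_0$ batches plus one Cleanup, each at most $2^{d+6}T\rho$, gives \eqref{eq:main-bound-rho}. Substituting $\rho$ yields \eqref{eq:main-bound}. The batch count is at most $B_0+1$ by construction.
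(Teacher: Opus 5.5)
\textbf{Step~3 is left unproved.} Your Steps~1--2 follow the paper's template: clean event, survivor gap $8r_m$, batch regret charged to the parent radius, and the $\gamma\ge1$ exit. Step~3, which you correctly identify as the crux, is left unproved, even though you wrote down the inequality that settles it. While $r_m\ge\rho$, the count bound $K_m\le C_zr_m^{-d_z}$ gives $\gamma_m\le(\rho/r_m)^{d_z+2}$. Hence $\log(r_{m+1}/\rho)\le a_z\log(r_m/\rho)$ with $a_z=1-\frac{d_z+2}{2(d+2)}\le a_0$.

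You dismissed this as ``only geometric, not doubly exponential'' and fell back on a comparison with the ACE recursion for the worst case $d_z=d$. That plan is too vague to check. Even if carried out, it would only yield $r_{B_0}\lesssim(HC_z/T)^{1/(d+2)}$, which is the non-adaptive rate and exceeds $\rho$ whenever $d_z<d$. In fact, geometric decay of $u_m=\log(r_m/\rho)$ \emph{is} the doubly exponential shrinkage you need:
\begin{itemize}
\item since $H,C_z\ge1$, you have $u_0=\log(1/\rho)\le\log T/(d_z+2)$;
\item the choice of $B_0$ gives $a_0^{B_0}\le1/\log T$;
\item therefore $u_{B_0}\le1/(d_z+2)$, i.e.\ $r_{B_0}\le\sqrt e\,\rho$.
\end{itemize}
You also need the observation that once some $r_j<\rho$, each later completed refinement has $\gamma_k<1$, so $r_{k+1}\le\bar r_{k+1}<r_k<\rho$. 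Together these bound the hard-cap Cleanup by $8Tr_{B_0}\le8\sqrt e\,T\rho$.

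Your additive ``$+O(d)$'' term comes only from the extra $2^d$ in $K\le2^dN_r$; the paper uses $K_m\le N_{r_m}$ and has no additive term. Keeping that $2^d$ also inflates the per-batch constant to $2^{3d/2+6}$. So the $2^{d+6}$ asserted in your Step~4 does not follow from your own steps.

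\textbf{The overflow exit is mishandled.} The condition $L_m>T-t_{m-1}$ does not by itself force $\gamma_{m-1}$ to be of constant order, because $T-t_{m-1}$ may be a small fraction of $T$. So you cannot conclude $r_{m-1}\lesssim\rho$ there. The missing idea is that no such bound is needed. The Cleanup plays only $T-t_{m-1}<L_m$ rounds inside the old survivors, so its regret is at most
\[
8r_{m-1}(T-t_{m-1})<8r_{m-1}L_m<2^{d+6}\sqrt{THK_{m-1}}.
\]
This is exactly the quantity that already controls a completed batch. Moreover, $\sqrt{THK_{m-1}}\le T\rho$ in both regimes: it is at most $T\rho(\rho/r_{m-1})^{d_z/2}$ if $r_{m-1}\ge\rho$, and it equals $Tr_{m-1}\sqrt{\gamma_{m-1}}<T\rho$ otherwise.

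\textbf{A smaller point in Step~1.} With the Gaussian tail $2\exp(-n_m\epsilon^2/2)$ used in the paper, your threshold $\epsilon=r_m/\sqrt2$ gives only $2T^{-4}$, not $2T^{-8}$. Take $\epsilon=r_m$ instead; the total error $2r_m$ still fits the $4r_m$ elimination rule. Also, because the number of cubes is data dependent, do the union bound conditionally on $\cF_{t_{m-1}}$ and use $\E[\sum_m|\A_m|]\le T$.
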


\begin{proof}
To prove the theorem, we first show that the concentration and
elimination properties in Lemmas~1--3 of \cite{feng2022blin} continue to
hold for the data-dependent edge lengths.  Based on these results, we
bound the regret of each completed refinement batch and the Cleanup step.
Finally, we show that $B_0$ refinement batches are sufficient.

First suppose that $\rho\ge1$.  Since
$\operatorname{diam}_{\infty}([0,1]^d)=1$, the Lipschitzness of $\mu$
gives $\Delta_x\le1$ for every arm $x$.  Hence,
$R(T)\le T\le T\rho$, and the regret bound follows directly.  The batch
bound follows from the construction of the algorithm.  In the following,
we assume $\rho<1$.

\paragraph{Adaptive-scale concentration.}
For each executed $C\in\A_m$, define
\[
  \bar\mu_m(C):=\frac1{n_m}\sum_{i=1}^{n_m}\mu(x_{C,i}).
\]
Fix an executed cube $C\in\A_m$.  Conditional on the history
$\cF_{t_{m-1}}$ available at the beginning of batch $m$ (and on any fresh
private randomization used to choose the sampling locations), $r_m$,
$\A_m$, $n_m$, and all sampling locations in this batch are fixed.
Therefore, the same Gaussian tail inequality as in Lemma~1 of
\cite{feng2022blin} gives
\begin{align}
 &\Pr\!\left(
   |\wh\mu_m(C)-\bar\mu_m(C)|>r_m
   \mid\cF_{t_{m-1}}
 \right)                                                     \notag\\
 &\hspace{3cm}\le
 2\exp\!\left(-\frac{n_mr_m^2}{2}\right)
 \le2\exp(-H/2)=2T^{-8}.                    \label{eq:cond-tail}
\end{align}
On the other hand, by the Lipschitzness of $\mu$, for every $x\in C$,
\[
  |\bar\mu_m(C)-\mu(x)|\le r_m.
\]
Let
\begin{equation}
  \cE:=\left\{
  |\wh\mu_m(C)-\mu(x)|\le2r_m
  \text{ for every executed }(m,C)\text{ and every }x\in C
  \right\}.
  \label{eq:event-E}
\end{equation}
Every estimated cube is played at least once.  Therefore,
\[
  \sum_m|\A_m|
  \le\sum_m|\A_m|n_m\le T.
\]
From here, applying the above conditional probability bound whenever a
cube is executed and then taking a union bound over at most $T$ executed
cubes gives
\begin{equation}
  \Pr(\cE^c)
  \le2T^{-8}\E\!\left[\sum_m|\A_m|\right]
  \le2T^{-7}.
  \label{eq:event-prob}
\end{equation}
This proves that the conclusion of Lemma~1 of \cite{feng2022blin} remains
valid for the data-dependent edge lengths used by CA-BLiN.

\paragraph{Elimination invariants.}
In the following, we work under event $\cE$.  We first show that an
optimal arm survives all eliminations.  Let $C_m^\star\in\A_m$ be the
cube containing an optimal arm $x^\star$.  Under event $\cE$, for any
cube $C\in\A_m$ and any $x\in C$, we have
\[
  \wh\mu_m(C)-\wh\mu_m(C_m^\star)
  \le \mu(x)+2r_m-\mu(x^\star)+2r_m
  \le4r_m.
\]
Then from the strict elimination rule, $C_m^\star$ is not eliminated.
This is the same argument as in Lemma~2 of \cite{feng2022blin}.

Based on this result, we show that the cubes surviving elimination are
of high reward.  Fix $C\in\A_m^+$ and $x\in C$.  Since $C$ is not
eliminated, $\wh\mu_m^{\max}-\wh\mu_m(C)\le4r_m$.  Under event $\cE$,
it holds that
\begin{align}
  \Delta_x
  &=\mu^\star-\mu(x)                                      \notag\\
  &\le
  \wh\mu_m(C_m^\star)+2r_m-\wh\mu_m(C)+2r_m             \notag\\
  &\le
  \wh\mu_m^{\max}-\wh\mu_m(C)+4r_m
  \le8r_m.                                  \label{eq:survivor-gap}
\end{align}
Hence, we conclude that $\Delta_x\le8r_m$.  This is the survivor form of
Lemma~3 of \cite{feng2022blin}.  In particular, every survivor cube is
contained in $S(8r_m)$.  Moreover, the centers of the survivor cubes form
an $r_m/2$-packing of a subset of
$S(8r_m)\subseteq S(16r_m)$.  Therefore, by the same argument used to
obtain inequality~(4) in the proof of Theorem~5 of \cite{feng2022blin},
\begin{equation}
  K_m:=|\A_m^+|\le N_{r_m}\le C_z r_m^{-d_z}.
  \label{eq:survivor-count}
\end{equation}
The same two bounds also hold for the virtual initial state $m=0$.
Indeed, $\operatorname{diam}_{\infty}([0,1]^d)=1$ gives
$\Delta_x\le1\le8r_0$ for every $x\in\A_0^+$.  Moreover, $K_0=1\le C_z$,
because $N_r\ge1$ for every $0<r<1$, and the inequality
$N_r\le C_zr^{-d_z}$ in Definition~2 of \cite{feng2022blin}, together
with $r\uparrow1$, implies $C_z\ge1$.  Thus, for every state $m\ge0$
reached by the algorithm,
\[
  \Delta_x\le8r_m
  \quad\left(x\in\bigcup_{C\in\A_m^+}C\right),
  \qquad
  K_m\le C_zr_m^{-d_z}.
\]
Finally, every arm played in batch $m+1$ belongs to a cube in
$\A_m^+$.  Hence,
\begin{equation}
  R_{m+1}\le8r_mL_{m+1}.
  \label{eq:batch-parent-gap}
\end{equation}
Note that this bound involves the parent edge length $r_m$, rather than
the new edge length $r_{m+1}$.

\paragraph{Length and regret of one refinement batch.}
We are now ready to bound the regret of one completed refinement batch.
Fix a state with survivor edge length $r_m$, survivor count $K_m$, and
$\gamma_m<1$.  By the definition of $\bar r_{m+1}$,
\begin{equation}
  \bar r_{m+1}^{d+2}
  =r_m^{d+2}\sqrt{\gamma_m}
  =r_m^{d+1}\sqrt{\frac{HK_m}{T}}.
  \label{eq:ideal-balance}
\end{equation}
Moreover, $\bar r_{m+1}/2<r_{m+1}\le\bar r_{m+1}$.  Since
$H/r_{m+1}^2\ge1$, the definition of $n_{m+1}$ gives
$n_{m+1}\le2H/r_{m+1}^2$.  Therefore, the length of batch $m+1$
satisfies
\begin{align}
  L_{m+1}
  &=K_m\left(\frac{r_m}{r_{m+1}}\right)^dn_{m+1}          \notag\\
  &\le\frac{2HK_mr_m^d}{r_{m+1}^{d+2}}                   \notag\\
  &<\frac{2^{d+3}HK_mr_m^d}{\bar r_{m+1}^{d+2}}          \notag\\
  &=2^{d+3}\frac{\sqrt{THK_m}}{r_m}
   =2^{d+3}T\sqrt{\gamma_m}.                \label{eq:batch-length}
\end{align}
Combining \eqref{eq:batch-parent-gap} and \eqref{eq:batch-length} gives
\begin{equation}
  R_{m+1}<2^{d+6}\sqrt{THK_m}.
  \label{eq:one-batch-regret}
\end{equation}

Equality $HC_z=T\rho^{d_z+2}$, together with
inequality~\eqref{eq:survivor-count}, gives
\begin{equation}
  \sqrt{THK_m}
  \le T\rho\left(\frac{\rho}{r_m}\right)^{d_z/2}.
  \label{eq:oracle-batch-bound}
\end{equation}
If $r_m\ge\rho$, the right-hand side is at most $T\rho$.  If
$r_m<\rho$, then $\gamma_m<1$ gives
\[
  \sqrt{THK_m}=Tr_m\sqrt{\gamma_m}<Tr_m<T\rho.
\]
Consequently, whether $r_m\ge\rho$ or $r_m<\rho$, every completed
refinement batch satisfies
\begin{equation}
  R_{m+1}\le2^{d+6}T\rho.
  \label{eq:uniform-batch-regret}
\end{equation}

\paragraph{The two early-cleanup cases.}
We next bound the regret in the two cases where the algorithm enters the
Cleanup step before reaching the hard cap.

First suppose that $\gamma_m\ge1$.  By
inequality~\eqref{eq:survivor-count} and the definition of $\rho$,
\[
  1\le\gamma_m
  \le\frac{HC_z}{Tr_m^{d_z+2}}
  =\left(\frac{\rho}{r_m}\right)^{d_z+2},
\]
and hence $r_m\le\rho$.  Inequality~\eqref{eq:survivor-gap} gives
\begin{equation}
  R_{\mathrm{cleanup}}
  \le8r_m(T-t_m)\le8T\rho.
  \label{eq:cleanup-gamma}
\end{equation}

Next suppose that $\gamma_m<1$, but the planned batch does not fit in the
remaining horizon, so that $L_{m+1}>T-t_m$.  The algorithm then cleans up
in the old survivor region.  From \eqref{eq:survivor-gap} and
\eqref{eq:batch-length}, we have
\begin{align}
  R_{\mathrm{cleanup}}
  &\le8r_m(T-t_m)
   <8r_mL_{m+1}                                             \notag\\
  &<2^{d+6}\sqrt{THK_m}                                    \notag\\
  &\le2^{d+6}T\rho.                         \label{eq:cleanup-fit}
\end{align}
The last inequality follows from \eqref{eq:oracle-batch-bound}
when $r_m\ge\rho$, and from
$\sqrt{THK_m}=Tr_m\sqrt{\gamma_m}<T\rho$ when $r_m<\rho$.

\paragraph{Scale contraction and the hard cap.}
It remains to show that the Cleanup step after $B_0$ completed
refinements also has small regret.  Suppose that the refinement from
$r_m$ to $r_{m+1}$ is completed.  When $r_m\ge\rho$,
inequality~\eqref{eq:survivor-count} and
equality~\eqref{eq:def-rho} give
\[
  \gamma_m\le\left(\frac{\rho}{r_m}\right)^{d_z+2}.
\]
Consequently,
\begin{equation}
  r_{m+1}
  \le\bar r_{m+1}
  \le\rho^{1-a_z}r_m^{a_z},
  \qquad
  a_z:=1-\frac{d_z+2}{2(d+2)}
  \le a_0.
  \label{eq:scale-contraction}
\end{equation}
If $r_j<\rho$ for some $j\le B_0$, then every subsequent completed
refinement satisfies $r_{k+1}<r_k<\rho$ for $k\ge j$, and the desired
conclusion already holds.  Otherwise, $r_m\ge\rho$ before every completed
refinement.  Iterating \eqref{eq:scale-contraction} and using $r_0=1$
gives
\begin{equation}
  \log\frac{r_{B_0}}{\rho}
  \le a_0^{B_0}\log\frac1\rho.
  \label{eq:iterate-contraction}
\end{equation}
By the definition of $B_0$, we have
$a_0^{B_0}\le1/\log T$.  Moreover, $H\ge1$ and Definition~2 of
\cite{feng2022blin} gives $C_z\ge1$.  Hence,
\[
  \log\frac1\rho
  =\frac{\log(T/(HC_z))}{d_z+2}
  \le\frac{\log T}{d_z+2}.
\]
Consequently,
\begin{equation}
  r_{B_0}\le e^{1/(d_z+2)}\rho\le\sqrt e\,\rho.
  \label{eq:final-radius}
\end{equation}
Thus the regret of the hard-cap Cleanup batch is at most
\begin{equation}
  8T r_{B_0}\le8\sqrt e\,T\rho<2^{d+6}T\rho.
  \label{eq:hard-cap-cleanup}
\end{equation}

\paragraph{Summation.}
There are at most $B_0$ completed refinement batches.  By
\eqref{eq:uniform-batch-regret}, their total regret is at most
$2^{d+6}B_0T\rho$.  After these batches, the algorithm uses at most one
Cleanup batch.  Its regret is bounded by \eqref{eq:cleanup-gamma},
\eqref{eq:cleanup-fit}, or \eqref{eq:hard-cap-cleanup}, according to the
stopping rule.  Therefore,
\[
  R(T)\le2^{d+6}(B_0+1)T\rho.
\]
This proves \eqref{eq:main-bound-rho}.  Substituting the definition of
$\rho$ into this inequality gives \eqref{eq:main-bound}.  The same
counting argument shows that the total number of batches is at most
$B_0+1$.  Since $R(T)\le T$, \eqref{eq:event-prob} also gives
$\E[R(T)]\le2^{d+6}(B_0+1)T\rho+2T^{-6}$.  Together with Theorem~10
(Theorem~3 in the introduction) and Corollary~1 of
\cite{feng2022blin}, this proves the optimal
$\Theta_d(\log\log T)$ batch-complexity order and finishes the proof.
\end{proof}

\end{document}